\newcommand{\diff}[1] {{\Large\color{blue} #1}}
\newtheorem{lmm}{Lemma}
\title{Justifying and Improving Meta-Agent Conflict-Based Search}
\author {David Tolpin, \\
dtolpin@robots.ox.ac.uk}
\begin{document}

\maketitle

\begin{abstract}
The Meta-Agent Conflict-Based Search~(MA-CBS) is a recently proposed
algorithm for the multi-agent path finding problem. The algorithm is
an extension of Conflict-Based Search~(CBS), which automatically
merges conflicting agents into meta-agents if the number of conflicts
exceeds a certain threshold. However, the decision to merge agents is
made according to an empirically chosen fixed threshold on the number
of conflicts. The best threshold depends both on the domain and on the
number of agents, and the nature of the dependence is not clearly
understood.

We suggest a justification for the use of a fixed threshold on the
number of conflicts based on the analysis of a model problem. Following
the suggested justification, we introduce new decision policies for
the MA-CBS algorithm, which considerably improve the algorithm's
performance. The improved variants of the algorithm are evaluated on
several sets of problems, chosen to underline different aspects of the
algorithms.
\end{abstract}

\section{Introduction}

In the Multi-Agent Path Finding~(MAPF) problem, we are given a graph
$G(V,E)$ and a set of $N$ agents $a_1 ... a_N$. Each agent $a_i$ has a
start position $s_i\in V$ and a goal position $g_i\in V$. At each time
step an agent can either move to a neighboring location or wait in its
current location, at some cost. The objective is to return a
least-cost set of actions for all agents, which will move all of the
agents from start to goal positions goal without conflicts (i.e.,
without any pair of agents {\it being in the same node} or {\it
  crossing the same edge} at the same time). MAPF has practical
applications in robotics, video games, aviation, vehicle routing, and other
domains~\cite{silver2005coopeartive,conf/socs/WangBK11}. In its
general form, MAPF is NP-complete, since it is a generalization of
the sliding tile puzzle, an NP-complete problem~\cite{conf/aaai/RatnerW86}.

In this paper we consider a particular variant of MAPF, for which
Meta-Agent Conflict-Based Search (MA-CBS)~\cite{Sharon.macbs}, the
algorithm explored here, was formulated. The total solution cost is the
sum of costs of all actions (and hence the sum of costs of solutions
for each of the agents). Any single action, as well as waiting during
a single time step in a non-goal position, has unit cost. Waiting in
the goal position has zero cost. The problem is solved in {\it
the centralized computing} setting, where a single program controls all
of the agents\footnote{This setting is tantamount to decentralized
  cooperative setting with full knowledge sharing and free
  communication~\cite{Sharon.macbs}.}.

MA-CBS is a generalization of Conflict-Based Search
(CBS)~\cite{conf/aaai/SharonSFS12}.  MA-CBS may serve as a bridge
between CBS and completely coupled solvers, such as A*,
A*+OD~\cite{conf/aaai/Standley10}, or
EPEA*~\cite{conf/aaai/FelnerGSSBSSH12}.  MA-CBS starts as a regular
CBS solver, where the low-level search is performed by a single-agent
search algorithm. At every search step MA-CBS employs a {\it
  heuristic:} if the number of conflicts for a pair of agents exceeds
a certain threshold $B$, MA-CBS merges the two agents into a combined
agent. Experimental results showed that for certain values of the
threshold MA-CBS outperforms both CBS and single-agent
search. However, threshold $B$ used in the heuristic has to be
empirically determined, and varies both with the size and shape of
graph $G$ and with the number of agents $N$.  Difficulty choosing the
`right' value for $B$ limits practical usability of MA-CBS.

Generally, a heuristic represents abstraction or approximation
of a phenomenon associated with the problem or
algorithm. Understanding why a particular heuristic works
helps make better decisions involving the heuristic. One
way to discover powerful heuristics for a particular problem
is to design them systematically~\cite{Holte.steps}. However, a
heuristic can also come as an insight, and in this case
explaining why the heuristic is successful helps further improve
the algorithm.

In this paper we look at the heuristic decision-making of MA-CBS,
in which a fixed threshold on the number of conflicts between a pair
of agents is used to replace the agents with a single combined agent.
Based on the observations of the dependence of the threshold on
features of the problem, we suggest an explanation for the threshold,
and propose a model problem where the decision can be made optimal
in a certain sense of optimality. Based on the model problem, we
empirically investigate variants of MA-CBS. The investigation
\begin{itemize}
\item provides further support for the hypothesis regarding the root
  cause behind the fixed threshold, and
\item allows improving MA-CBS algorithm through better use of the
  heuristic.
\end{itemize}
Finally, we propose more efficient decision rules, based both on
well-known and new results, for merging agents
in MA-CBS. We empirically compare the variants of MA-CBS on
different problem domains to illustrate a steady increase of
performance.

\textbf{Contributions} of the paper are:
\begin{itemize}
\item Justification of heuristic decision-making in MA-CBS.
\item Improvement of MA-CBS based on understanding of the phenomenon
  behind the use of a fixed threshold.
\item Derivation of variants of MA-CBS with further improved
  performance.
\end{itemize}

\section{Background and Related Work}

The pseudocode for MA-CBS is shown in Algorithm~\ref{alg:ma-cbs}. Like
CBS, MA-CBS maintains a list of nodes, sorted by the increasing sum of
costs of individual solutions (SIC). At every step of the main loop
(lines~\ref{alg:ma-cbs-loop}--\ref{alg:ma-cbs-end-loop}) a node with
the lowest SIC is removed from the node list
(line~\ref{alg:ma-cbs-pop}). If the solutions in the node do not have
any conflicts, this set of solutions is returned as the solution for
the problem~(line~\ref{alg:ma-cbs-solutions}). In case of conflicts
there are two possibilities. MA-CBS either adds, just like CBS, two
nodes to the node list. The nodes are created according to a single
conflict between a pair of agents. Each of the nodes has the solution
for one of the agents updated to avoid the conflict with the other
agent~(lines~\ref{alg:ma-cbs-split}--\ref{alg:ma-cbs-end-split}). Otherwise,
MA-CBS merges the two agents into a combined agent and adds a single
node to the node list with the combined agent instead of the pair of
agents~(lines~\ref{alg:ma-cbs-merge}--\ref{alg:ma-cbs-end-merge}). The
decision whether to split or to merge is based on parameter $B$: the
agents are merged if the number of encountered conflicts between the
agents since the beginning of the search is at least $B$.
\begin{algorithm}[h!]
\caption{MA-CBS}
\label{alg:ma-cbs}
\begin{algorithmic}[1]
\Procedure {MA-CBS}{Agents, B}
\State Nodelist $\gets$ [\Call{Node}{Agents}]
\Loop \label{alg:ma-cbs-loop}
   \If {\Call{Empty?}{Nodelist}} {~\Return FAILURE}
   \Else
     \State Node $\gets$ \Call{Pop}{Nodelist} \label{alg:ma-cbs-pop}
     \If {\Call{Conflicts?}{Node}}
       \If {\Call{Merge?}{Node,B}}
         \State Node$'$ $\gets$ \Call{Merge}{Node} \label{alg:ma-cbs-merge}
         \State \Call{\sc Insert}{Node$'$, Nodelist} \label{alg:ma-cbs-end-merge}
       \Else
         \State Node$'$, Node$''$ $\gets$ \Call{Split}{Node} \label{alg:ma-cbs-split}
         \State \Call{Insert}{Node$'$, Nodelist}
         \State \Call{Insert}{Node$''$, Nodelist} \label{alg:ma-cbs-end-split}
       \EndIf
     \Else {~\Return \Call{Solutions}{Node}} \label{alg:ma-cbs-solutions}
     \EndIf
   \EndIf
\EndLoop \label{alg:ma-cbs-end-loop}
\EndProcedure
\end{algorithmic}
\end{algorithm}

Both CBS and MA-CBS solve MAPF optimally, however sub-optimal variants
of CBS were also introduced~\cite{conf/socs/BarrerSSF14}. On the
other hand, different algorithms for solving MAPF optimally are also
pursued. Some of the other algorithms bear similarities to MA-CBS,
such as Independence Detection (ID)~\cite{conf/aaai/Standley10}, which
for every pair of conflicting agents tries to find an alternative
solution for each agent avoiding the conflicts, and if failed
merges the conflicting agents into a combined agent. A suboptimal
variant of ID offers a trade-off between running time and solution
quality~\cite{conf/ijcai/StandleyK11}. Other algorithms, such as
A*+OD~\cite{conf/aaai/Standley10}, EPEA*\cite{conf/aaai/FelnerGSSBSSH12},
or ICTS~\cite{journals/ai/SharonSGF13} can be used for
lower-level search in MA-CBS.

\section{Justification of Fixed Threshold}

The authors of MA-CBS summarized the results of their empirical
evaluation of the algorithm with an evidence that:
\begin{itemize}
\item The best value of threshold $B$ decreases with hardness of the
  problem instances.
\item The advantage of MA-CBS is more prominent on harder instances.
\end{itemize}
Such behavior is characteristic for \emph{online competitive
  algorithms}~\cite{conf/stoc/ManasseMS88}, and in particular reminds of
the \textit{ski rental problem}, also known as \textit{snoopy caching
  problem} in a more general
setting~\cite{journals/algorithmica/KarlinMRS88}. In the ski rental
problem a tourist at a ski resort may either pay a fixed rent for each
day of ski rental, or to buy the ski, obviously at a higher price. The
tourist does not know in advance how many days he is going to spend at
the resort, and must decide every morning whether to rent or to buy. The
famous result for this problem is that the tourist should rent the ski
for $\frac {\mbox{ski price}} {\mbox{daily rent}} - 1$ days, and to
buy the ski on the next day if he/she is still at the resort. This
algorithm is 2-competitive, that is the tourist will spend at most
twice as much money as if the number of days were known in advance,
and this is the best competitive ratio a deterministic algorithm can
achieve. There are randomized algorithms with a lower competitive
ratio though~\cite{Karlin.randomized}.

Consequently, we conjectured that the fixed threshold in MA-CBS plays
a role similar to the threshold in the online algorithm for the ski
rental problem. Both theoretical analysis and empirical evaluation
confirmed this conjecture.

\subsection{Model Problem: 2 agents}

\begin{figure}[t]
\centering
\begin{tabular}{c c}
\includegraphics[scale=0.9]{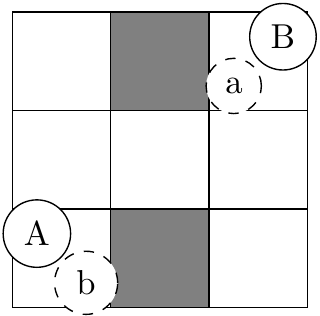} 
& \includegraphics[scale=0.9]{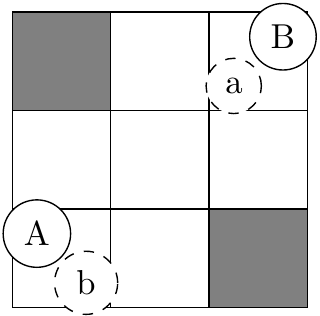} \\
a & b
\end{tabular}
\caption{Scenes with 2 agents.}
\label{fig:bat}
\end{figure}

Consider the MAPF problem for 2 agents as the simplest non-trivial
case. If MA-CBS is used, and the number of conflicts reaches $B$, some
number of merges between 1 and the number of nodes currently in the
node list solves the problem instance. If the time to find a solution
for the combined agent does not become much shorter when constraints
are added, it may be better to just remove all constraints and compute
the solution for the combined agent \emph{once}, rather than multiple
times for each node in the node list. We shall call a version of
MA-CBS that restarts the search upon a merge MA-CBS/R. A comparative
evaluation of MA-CBS and MA-CBS/R is provided in
Table~\ref{tbl:bat-1-R}. The problem instance is shown in
Figure~\ref{fig:bat}.a. The number of merges performed by MA-CBS is, for
all but extreme, (1 and 8) values of $B$ is greater than 1 (the number
of restarts in MA-CBS/R), and the number of single-agent nodes
expanded by MA-CBS is greater than by MA-CBS/R.\footnote{Let us
  note that the number of expanded single-agent nodes is, along with
  the search time, an adequate measure of the performance of CBS,
  MA-CBS, and variants. Evaluation of the distance heuristic for a
  single agent can be memoized, and the total heuristic evaluation
  time is thus negligible compared to the time spent expanding
  single-agent nodes and generating children satisfying the
  constraints.}

\begin{table}[h!]
\centering
\begin{tabular}{ r | r | r | r }
  & & \multicolumn{2}{c}{{\bf nodes}}\\ \cline{3-4}
  &&&\\[-8pt]
B & {\bf merges} & {\bf MA-CBS} & {\bf MA-CBS/R} \\
1 & 1 & 66 & 66 \\
2 & 2 & 136 & 80 \\
3 & 3 & 207 & 95 \\
4 & 4 & 278 & 110 \\
5 & 3 & 238 & 126 \\
6 & 2 & 198 & 142 \\
7 & 1 & 158 & 158 \\
8+ & 0 & 118 & 118
\end{tabular}
\caption{MA-CBS/R vs MA-CBS for scene~\ref{fig:bat}.a.}
\label{tbl:bat-1-R}
\end{table}

The intuition behind MA-CBS/R is formalized by the following two
lemmas about competitiveness of both MA-CBS/R and MA-CBS
for 2 agents:
\begin{lmm}
  Let us denote by $T_2$ the time to find the shortest path for the
  combined agent, and by $T_{1,1}$ the time to find the shortest paths
  for both agents independently, ignoring conflicts between the
  agents. Under the assumptions that
  \begin{itemize}
  \item[a)] $T_{1,1}$ and $T_2$ are constant for a given problem
    instance at any point of the algorithm, 
  \item[b)] $T_{2}\ge T_{1,1}$, and
  \item[c)] the ratio $\frac {T_2} {T_{1,1}}$ is known in advance,
  \end{itemize}
  MA-CBS/R is $2-\frac 1 B$-competitive, and the competitive ratio
  is achieved for $B=\lfloor \frac {T_2} {T_{1,1}} \rfloor$.
  \label{lmm:ma-cbs-r-2-comp}
\end{lmm}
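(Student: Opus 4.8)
The plan is to reduce the merge decision made by MA-CBS/R to the classical ski rental problem and then transport its deterministic competitive bound. First I would make the cost model implied by assumption~(a) explicit: since $T_{1,1}$ and $T_2$ are constant throughout the run, I can charge $T_{1,1}$ to each conflict that the algorithm chooses to resolve by splitting (one round of low-level re-planning of the two agents) and $T_2$ to the single combined-agent search triggered by a merge-and-restart. The hidden parameter controlled by the adversary is $k$, the number of conflicts a pure no-merge run would have to resolve before returning a solution; this is the analogue of the unknown number of ski days. Assumption~(c) then supplies the analogue of the known price ratio $r=T_2/T_{1,1}$, and assumption~(b) guarantees $r\ge 1$, so that ``buying'' (merging) is never cheaper than a single ``day'' of splitting and the problem is non-degenerate.

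Next I would write the online cost with threshold $B$ as a piecewise function of $k$: for $k<B$ the instance is solved by splitting alone at cost $k\,T_{1,1}$, while for $k\ge B$ the algorithm pays for the $B-1$ splits performed before the threshold is reached and then restarts on the merged agent, for a total of $(B-1)\,T_{1,1}+T_2$. The offline optimum, knowing $k$, picks the cheaper pure strategy, $\mathrm{OPT}(k)=\min\{k\,T_{1,1},\,T_2\}$. For a fixed $B$ the ratio $C_B(k)/\mathrm{OPT}(k)$ equals $1$ whenever $k<B$, and for $k\ge B$ it is largest at the smallest admissible instance $k=B$, because the numerator is then constant in $k$ while the denominator only grows. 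Evaluating at $k=B$ gives the worst-case ratio $\rho(B)=\frac{(B-1)\,T_{1,1}+T_2}{B\,T_{1,1}}=1+\frac{r-1}{B}$ in the regime $B\le r$, with the symmetric expression $1+\frac{B-1}{r}$ once $B$ exceeds $r$.

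To locate the optimal threshold I would then minimise $\rho(B)$ over $B$ using the ratio $r$ known from assumption~(c). The first branch is decreasing in $B$ and the second increasing, so the two regimes balance exactly at $B=r$, where both equal $2-\tfrac1r$; forcing $B$ to be an integer yields $B=\lfloor T_2/T_{1,1}\rfloor$, and substituting back recovers the stated competitive ratio $2-\tfrac1B$. This is precisely the classical $2-\tfrac1B$ ski rental bound under the identification of $B$ with the rent-to-buy horizon $T_2/T_{1,1}$.

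I expect the main obstacle to lie not in this arithmetic but in justifying the linear, constant-increment cost model in the first place. Concretely, I must argue that, despite the branching of CBS, the work preceding a merge can legitimately be summed as equal $T_{1,1}$ increments and that the restart genuinely incurs a fresh, unshared $T_2$; this is exactly where assumption~(a) and the restart in MA-CBS/R (as opposed to plain MA-CBS, treated by the companion lemma) do the real work. A secondary delicate point is the integrality gap: the clean value $2-\tfrac1B$ is attained exactly when $T_2/T_{1,1}$ is integral, so for a non-integral ratio I would verify that rounding down to $B=\lfloor T_2/T_{1,1}\rfloor$ still gives the best achievable deterministic threshold and state the bound with that understanding.
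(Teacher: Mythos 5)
Your proposal is correct and takes essentially the same route as the paper: the paper's entire proof is the observation that, under assumptions (a)--(c), the split-or-merge decision in MA-CBS/R with restart \emph{is} the ski rental (two caches, one block snoopy caching) problem with rent $T_{1,1}$ and purchase price $T_2$, and it delegates the $2-\frac{1}{B}$ bound to the cited result of Karlin et al. The only difference is that you explicitly carry out the classical rent-vs-buy arithmetic (piecewise online cost, worst case at $k=B$, balancing the two regimes at $B=\lfloor T_2/T_{1,1}\rfloor$) and flag the integrality caveat, all of which the paper leaves implicit in the citation.
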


\begin{proof} Since merging two agents solves a 2-agent problem
  at the cost $T_2$, and splitting on a conflict may or
  may not solve the problem at the cost $T_{1,1}$, this problem is
  equivalent to the ski rental or two caches and one block snoopy
  caching problem~\cite{journals/algorithmica/KarlinMRS88}.
\end{proof}

\begin{lmm}
  Under the assumptions of Lemma~\ref{lmm:ma-cbs-r-2-comp}
  MA-CBS is $1+B-\frac 1 B$-competitive, and the competitive ratio
  is achieved for $B=\lfloor \frac {T_2} {T_{1,1}} \rfloor$.
  \label{lmm:ma-cbs-2+B-comp}
\end{lmm}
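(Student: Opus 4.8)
The plan is to reuse the ski-rental reduction behind Lemma~\ref{lmm:ma-cbs-r-2-comp} and to isolate the one structural difference that separates MA-CBS from MA-CBS/R, namely the absence of a restart. I would keep the same accounting — a split costs $T_{1,1}$ and a merge (a combined-agent solve) costs $T_2$ — and the same online adversary, who decides how many splits $n$ would resolve the conflict (possibly $n=\infty$), so that the offline optimum is $\mathrm{OPT}=\min(n\,T_{1,1},\,T_2)$, with the worst case $n=\infty$ giving $\mathrm{OPT}=T_2$.

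The heart of the argument is counting the merges. Since the threshold counts all conflicts seen since the start and a merge triggers no restart, once the count reaches $B$ every still-pending conflicting node is merged rather than split. Following the node list, the $B-1$ splits that precede the threshold leave $B$ nodes on the list (starting from the root, each split is a net gain of one node); the node whose conflict crosses the threshold is merged, and each of the remaining $B-1$ nodes is subsequently popped and merged, for $B$ merges in total. This is exactly where MA-CBS departs from MA-CBS/R, whose restart replaces all of these by a single combined-agent solve.

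With $B-1$ splits and $B$ merges the worst-case cost is $(B-1)\,T_{1,1}+B\,T_2$, and dividing by $\mathrm{OPT}=T_2$ and setting $B=\lfloor T_2/T_{1,1}\rfloor$ (tight when $T_2=B\,T_{1,1}$) gives
\[
\frac{(B-1)\,T_{1,1}+B\,T_2}{T_2}=\frac{B-1}{B}+B=1+B-\frac 1 B,
\]
the stated ratio; equivalently, MA-CBS pays for $B-1$ extra merges of cost $T_2=\mathrm{OPT}$ each, so its ratio is that of MA-CBS/R plus $B-1$, i.e. $(2-\frac 1 B)+(B-1)$. The main obstacle is making the merge count rigorous: I must show the adversary can force all $B$ pending nodes to be expanded and merged before any conflict-free (merged) node is returned — that each still carries a conflict and has smaller SIC than the first conflict-free node. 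Once this is established the remaining steps are routine and parallel Lemma~\ref{lmm:ma-cbs-r-2-comp}; checking the intermediate range $B\le n<\infty$ adds nothing, since there $\mathrm{OPT}=T_2$ while the MA-CBS cost is unchanged.
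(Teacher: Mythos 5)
Your proposal is correct and follows essentially the same route as the paper: the same node-count observation ($B-1$ splits leave $B$ nodes, hence $B$ merges), the same worst-case cost $(B-1)T_{1,1}+BT_2$, and the same ski-rental-style division yielding $1+B-\frac{1}{B}$ at $B=\lfloor T_2/T_{1,1}\rfloor$. The rigor gap you flag (forcing all $B$ pending nodes to be merged) is also left implicit in the paper's proof, so your attempt matches it in both substance and level of detail.
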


\begin{proof} After $k$ splits there are $k+1$ nodes in the node list
(Algorithm~\ref{alg:ma-cbs}) for any $k\ge 0$. Hence, MA-CBS performs
at most $B-1$ splits and then at most $B$ merges, and the
worst-case time is $T_{MA-CBS}=BT_2+(B-1)T_1$. Just like in
the proof for the ski rental problem, the competitive ratio is
\begin{equation}
\min_B\min\left(\frac {T_{MA-CBS}}
    {T_2},\frac{T_{MA-CBS}}{BT_1}\right)=1+B-\frac 1 B
\label{eqn:c-ma-cbs}
\end{equation}
for $B=\lfloor \frac {T_2}
{T_1} \rfloor$.
\end{proof}

According to the assumptions of Lemma~\ref{lmm:ma-cbs-r-2-comp},
$\frac {T_2} {T_{1,1}}$ is at least 1, hence MA-CBS/R is competitive
with a lower ratio (that is, in the worst case finds a solution in a
shorter time) than MA-CBS.

The worst-case approach is apparently a reasonable option for
designing an algorithm for 2-agent MAPF. Table~\ref{tbl:bat-01} shows
solution costs and the amount of computation spent by CBS and MA-CBS/R
to find the solutions for two problem instances in
Figure~\ref{fig:bat}. Both instances have  agents at the same
locations, as well as the same number of passable cells, and the same
position of the bottleneck. Nonetheless, the cost of an optimal
solution for the instance in Figure~\ref{fig:bat}.a is 11, and CBS has
to resolve 7 conflicts, but for the instance in Figure~\ref{fig:bat}.b
the cost is 9, and only 1 conflict has to be resolved before a
solution is found. MA-CBS/R is more efficient for \ref{fig:bat}.a but
not for \ref{fig:bat}.b, where CBS is faster.

\begin{table}[h!]
\centering
\begin{tabular}{c| r | r | r |  r |  r}
 & &\multicolumn{2}{c|}{\bf CBS}&\multicolumn{2}{c}{{\bf MA-CBS/R($\pmb{B=1}$)}}  \\  \cline{3-6}
{\bf scene}&{\bf cost}& {\bf nodes} & {\bf splits} & {\bf nodes} & {\bf restarts} \\
\ref{fig:bat}.a  & 11 & 118 & 7 & 66 & 1 \\
\ref{fig:bat}.b  & 9 & 20 & 1 & 24 & 1
\end{tabular}
\caption{Number of expanded nodes.}
\label{tbl:bat-01}
\end{table}

\section{MA-CBS/R for Any Number of Agents}

MA-CBS/R can be extended to an arbitrary number of agents.
The pseudocode of MA-CBS/R is shown in
Algorithm~\ref{alg:ma-cbs-r}. MA-CBS/R differs from MA-CBS
(Algorithm~\ref{alg:ma-cbs}) in
lines~\ref{alg:ma-cbs-r-restart}--\ref{alg:ma-cbs-r-end-restart}.
Firstly, {\sc Merge/R} creates a node with unconstrained solutions for
individual agents. Secondly, the node list is
re-initialized to contain just the new
node~(line~\ref{alg:ma-cbs-r-end-restart}). Effectively, the search
is restarted with the two agents replaced by a combined agent.
\begin{algorithm}[h!]
\caption{MA-CBS/R}
\label{alg:ma-cbs-r}
\begin{algorithmic}[1]
\Procedure {MA-CBS/R}{Agents,B}
\State Nodelist $\gets$ [\Call{Node}{Agents}]
\Loop
   \If {\Call{Empty?}{Nodelist}}{~\Return FAILURE}
   \Else
     \State Node $\gets$ \Call{Pop}{Nodelist}
     \If {\Call{Conflicts?}{Node}}
       \If {\Call{\diff{Restart?}}{Node, B}} \label{alg:ma-cbs-r-restart}
         \State Node$'$ $\gets$ \Call{\diff{Merge/R}}{Node} \label{alg:ma-cbs-r-merge}
         \State \diff{Nodelist $\gets$ [Node$'$]} \label{alg:ma-cbs-r-end-restart}
       \Else
         \State Node$'$, Node$''$ $\gets$ {\sc Split}(Node)
         \State \Call{Insert}{Node$'$, Nodelist}
         \State \Call{Insert}{Node$''$, Nodelist}
       \EndIf
     \Else {~\Return \Call{Solutions}{Node}}
     \EndIf
   \EndIf
\EndLoop
\EndProcedure
\end{algorithmic}
\end{algorithm}

The decision whether to merge two agents and restart the search is
again based on a fixed threshold. Given the suggested interpretation
for $B$ as an estimate of $\frac {T_{2}} {T_{1,1}}$, merging
\emph{combined}, instead of single, agents into a larger yet agent
should require a threshold that depends on the sizes of the agents to
be merged. This was confirmed by preliminary experiments on partial
sliding tile puzzle (see below), which showed that using {\it the
  same} $B$ for merging both single and combined agents, as in the
original version of MA-CBS~\cite{Sharon.macbs}, \emph{slows down} the
search compared to merging just single agents. Indeed, the number of
children grows exponentially with the number of single agents in a
combined agent, and thus the search time grows at least exponentially
with the size of the combined agent, demanding a higher $B$. In the
experiments\footnote{The program code and the problem instances for
  the experiments in this paper are attached to the submission. In the
  camera-ready version a URL pointing at a public source code
  repository will be provided instead.}, we limited the maximum size
of a combined agent to 2, that is, only single agents would be merged,
efficiently setting $B=\infty$ for producing combined agents
consisting of more than 2 single agents. A more advanced
implementation would be based on different values of $B$ for different
sizes of agents to be merged.

\subsection{Exploring MA-CBS/R with Partial Sliding Tile Puzzle}

The sliding tile puzzle problem is quite obviously an example of MAPF,
and the NP-completeness of MAPF is shown through reduction to the
sliding tile puzzle~\cite{Sharon.macbs}. However, a modified version
of the puzzle can also be used to empirically explore MAPF algorithms
and, in the case of MA-CBS and MA-CBS/R, to understand the influence
of the number of agents and the threshold $B$ on the search time.

\begin{figure}[t]
\centering
\includegraphics[scale=0.8]{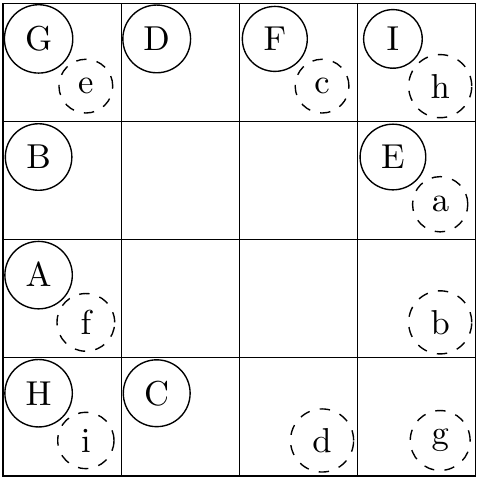}
\caption{A partial sliding tile puzzle instance, 9 tiles.}
\label{fig:4x4-map}
\end{figure}

We used the \emph{partial} sliding tile puzzle, in which only
some of the tiles are present on the $4\times 4$ board, for the
exploration. A problem instance with 9 tiles is shown in
Figure~\ref{fig:4x4-map}. The original locations of the tiles
are marked by solid circles with the uppercase letter
identifying the tile. Dashed circles with lowercase letters are
the goal locations for each of tiles. Any instance with fewer
than 15 tiles is solvable~\cite{JohnsonStory.15puzzle}. The
partial sliding $N$-tile puzzle can be viewed as an MAPF problem
with $N$ agents. A solution of the partial sliding tile puzzle
is translated to a solution of the MAPF problem with each
sequence of upto $N$ moves of different tiles translated to the
simultaneous move of the agents (where some of the agents may be
waiting). Hence, any solvable instance of the puzzle is also
solvable as an MAPF instance.  The solution costs and optimal
solutions can be different though, since waiting in a non-goal
position is free in the partial sliding tile puzzle but not in
the MAPF problem.

\begin{table}[h!]
\centering
\includegraphics[scale=0.5]{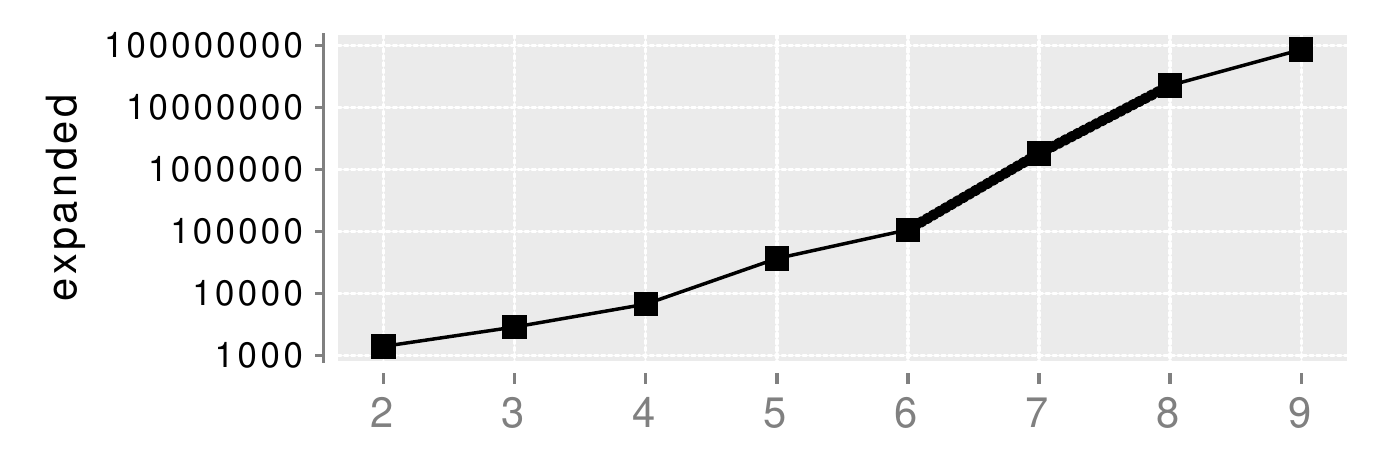}
\begin{tabular}{c | r | r | r}
{\bf agents} & {\bf time, sec} & {\bf expanded} & {\bf split} \\
2 &     1.3 & 1,398 & 18 \\
3 &     2.3 & 2,872 & 91 \\
4 &     4.0 & 6,710 & 349 \\
5 &    13.0 & 36,632 & 2,457 \\ \rowcolor[gray]{0.95}
6 &    33.8 & 106,622 & 7,489 \\ \rowcolor[gray]{0.95}
7 &   518.4 & 1,814,141 & 125,920 \\ \rowcolor[gray]{0.95}
8 & 8,332.3 & 28,832,218 & 1,511,996 \\
9 & 35,428.9 & 86,491,681 & 5,797,029 \\
\end{tabular}
\caption{CBS on $4\times 4$ tile puzzle, amount of computation vs. the number of
  agents.}
\label{tbl:cbs-4x4}
\end{table}

Since solving the sliding tile puzzle optimally requires
problem-specific algorithms~\cite{Korf.BFID85}, we
needed to determine the number of tiles for which problem instances
are hard enough but still solvable using CBS with an A* variant as the
low-level solver. We generated a set of 100 random scenes (agent
locations) of $4\times4$ partial sliding tile puzzle for every number
of agents from 2 to 9, spreading the agents in such a way that
conflicts between the agents are likely. Table~\ref{tbl:cbs-4x4} shows
the total running time of CBS, number of expanded nodes, and number of
splits, for a range of values of $B$.  The growth of the number of
expanded nodes is the highest for 6--8 agents (the bold curve
segment), where we should expect MA-CBS becoming better than
CBS. Indeed, for up to 6 agents CBS is faster than MA-CBS for any $B$
on the problem sets used. However for 7 agents MA-CBS becomes
comparable to CBS, and for 8 agents MA-CBS/R outperforms CBS for a
range of values of $B$, as one can see in Table~\ref{tbl:4x4-8}.

\begin{table}[h!]
\begin{tabular}{ r | r | r | r | r }
{\bf B} & {\bf time, sec} & {\bf expanded} & {\bf split} & {\bf reset} \\
1 & 7,997.4 & 35,214,246 & 569,351 & 381 \\
4 & 9,170.4 & 41,795,136 & 615,817 & 291 \\
19 & 7,011.4 & 30,990,369 & 482,125 & 158 \\
63 & 6,249.0 & 25,826,426 & 480,688 & 115 \\
94 & {\bf 5,480.9} & {\bf 21,999,664} & 424,350 & 105 \\
211 & 6,940.8 & 28,448,940 & 571,912 & 85 \\
317 & 7,047.4 & 28,700,663 & 621,117 & 73
\end{tabular}
\caption{MA-CBS/R on $4\times 4$ tile puzzle, 8 agents.}
\label{tbl:4x4-8}
\end{table}

$B$ can be learned offline on a subset of problem instances. However,
a more efficient approach is to directly estimate the ratio
$\frac{T_2}{T_{1,1}}$ from a few runs of CBS and MA-CBS/R, and to use
$B \approx \frac{T_2}{T_{1,1}}$.  Indeed, this estimate gives
$B \approx 100 \pm 10$ for problem instances with 8 agents, and MA-CBS/R
performs reasonably well in the vicinity of this value of $B$
(Table~\ref{tbl:4x4-8}).

The relative performance of the original MA-CBS is consistent with the
results for 2 agents. Table~\ref{tbl:4x4-8-M} shows the running time,
number of expanded nodes, number of splits and merges, for the same
parameters as Table~\ref{tbl:4x4-8}. The difference between MA-CBS and
MA-CBS/R is more prominent for higher values of $B$, where
MA-CBS/R exhibits much lower running times and numbers of expanded
nodes. Only by $B=317$ the search time and the number of expanded nodes
begin to decrease; this, like in the case of 2 agents, can be
explained by the decrease in the number of search branches reaching
this number of conflicts.
\begin{table}[h!]
\begin{tabular}{ r | r | r | r | r }
{\bf B} & {\bf time, sec} & {\bf expanded} & {\bf split} & {\bf merge} \\
1 & 12,077.5 & 42,593,750 & 876,473 & 39,492  \\
4 & 32,443.3 & 136,769,252 & 2,892,755 & 117,777  \\
19 & 43,633.9 & 170,588,470 & 3,830,660 & 176,188  \\
63 & 46,647.9 & 160,939,149 & 3,669,123 & 182,695  \\
94 & 48,409.4 & 191,118,297 & 4,377,734 & 210,567  \\
211 & 48,993.2 & 199,920,613 & 4,442,505 & 211,900  \\
317 & 40,329.0 & 162,546,522 & 3,861,403 & 167,136 
\end{tabular}
\caption{MA-CBS on $4\times 4$ tile puzzle, 8 agents.}
\label{tbl:4x4-8-M}
\end{table}

\subsection{An Estimate on Competitive Ratio}

An important question is how competitive an MA-CBS/R algorithm can be
in the general case. One answer to the question is the following
lemma, which can be proven by construction.

\begin{lmm}
  Under the assumption that the cost of finding a solution for a
  single meta-agent of any size is known in advance, MA-CBS/R for an
  arbitrary number of agents $N$ can achieve $1 + \lceil \frac N 2 \rceil$
  competitive ratio.
  \label{lmm:ma-cbs-worst-case-ratio}
\end{lmm}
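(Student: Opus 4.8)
The plan is to read the statement as an \emph{achievability} (upper-bound) guarantee: I will exhibit a concrete decision policy for MA-CBS/R and show that on every $N$-agent instance its total computation cost is at most $\left(1+\lceil\frac N2\rceil\right)$ times the offline optimum. First I would fix the cost model inherited from Lemma~\ref{lmm:ma-cbs-r-2-comp}: for a set $S$ of agents let $T_S$ denote the cost of solving the single meta-agent on $S$, which is known in advance by the lemma's hypothesis, and assume the natural monotonicity $T_{S}\ge T_{S'}$ whenever $S\supseteq S'$, generalizing assumption~(b). I define the offline optimum $\mathrm{OPT}$ as the cost of directly computing the cheapest grouping $G^{\ast}$ whose per-group solutions are already conflict-free, so that $\mathrm{OPT}=\sum_{M\in G^{\ast}}T_M$; in particular $\mathrm{OPT}\ge T_M$ for every group $M$ that the online run ever forms, by monotonicity.

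Next I would construct the policy by lifting the two-agent ski-rental rule of Lemma~\ref{lmm:ma-cbs-r-2-comp} to the meta-level: whenever two current agents $x,y$ conflict, restart-and-merge them once the conflict count reaches $B_{xy}=\lfloor T_{x\cup y}/T_{x,y}\rfloor$, where $T_{x,y}$ is the cost of solving $x$ and $y$ independently, and -- exactly as in the experimental setup -- cap the size of any meta-agent at $2$. A run of this policy is a sequence of \emph{episodes}, each being the search performed between two consecutive restarts, with the last episode ending in the conflict-free solution. Because each restart merges two previously unmerged singletons into a pair, and the pairs are disjoint, a run contains at most $\lfloor\frac N2\rfloor$ restarts; hence it has at most $\lfloor\frac N2\rfloor+1\le 1+\lceil\frac N2\rceil$ episodes.

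Then I would bound the cost of a single episode by $\mathrm{OPT}$. Within an episode the grouping is fixed, say $H$, and by the threshold $B_{xy}$ the algorithm performs at most $B_{xy}-1$ conflict resolutions before it restarts; applying the ski-rental accounting of Lemma~\ref{lmm:ma-cbs-r-2-comp} verbatim to this episode shows that the work it expends (the discarded splits together with the solve that exposed the conflicts) is at most the solve cost of the meta-agent created at its restart, which is some $T_M\le\mathrm{OPT}$. The terminal episode simply solves $G^{\ast}$ and costs $\mathrm{OPT}$. Summing the at most $1+\lceil\frac N2\rceil$ episodes, each of cost at most $\mathrm{OPT}$, yields a total of at most $\left(1+\lceil\frac N2\rceil\right)\mathrm{OPT}$, which is the claimed competitive ratio.

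The main obstacle is the per-episode bound, because each restart throws away the work of its episode, so wasted split work must be charged directly against $\mathrm{OPT}$ rather than telescoped into later episodes. Making this rigorous needs two ingredients that I would isolate as small sublemmas: the ski-rental threshold must guarantee that the discarded work of an episode never exceeds the cost of the merge that ends it, and the monotonicity $T_S\ge T_{S'}$ for $S\supseteq S'$ must ensure that every intermediate solve is dominated by the final, offline, solve. Two further points require care: I must either restrict attention to instances that pairwise merging can resolve or separately argue completeness of the size-$2$-capped policy, and I should note that for odd $N$ the episode count $\lfloor\frac N2\rfloor+1$ lies one below the stated $1+\lceil\frac N2\rceil$, so the bound holds with a small slack rather than tightly.
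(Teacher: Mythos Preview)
Your route diverges from the paper's and, as written, does not go through. Two concrete gaps:

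\textbf{The per-episode bound is not established.} You assert that an episode's work is bounded by $T_M$, where $M$ is the pair merged at its restart, by ``applying the ski-rental accounting of Lemma~\ref{lmm:ma-cbs-r-2-comp} verbatim.'' But Lemma~\ref{lmm:ma-cbs-r-2-comp} is a two-agent statement: the only work in an episode there is re-solving the two agents under added constraints. With $N$ agents, an episode must solve \emph{all} current groups at the root and may split on many different conflicting pairs before some pair first hits its threshold; the total number of splits can be as large as $\sum_{(x,y)}(B_{xy}-1)$, each involving agents other than the eventual merge pair. Nothing ties that work to $T_M$. Moreover, your claim $T_M\le\mathrm{OPT}$ ``by monotonicity'' needs $M$ to be contained in a single group of $G^{\ast}$; if the algorithm merges two singletons that lie in different optimal groups, monotonicity says nothing about $T_M$ versus $\mathrm{OPT}$. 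You flag this issue but do not resolve it.

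\textbf{The size-2 cap undermines both completeness and the terminal-episode claim.} You state that the terminal episode ``simply solves $G^{\ast}$,'' but with pairs only, the algorithm need not reach $G^{\ast}$: if $G^{\ast}$ contains a group of size $\ge 3$, no pairing resolves it and the run never terminates; and even when it does terminate, the final grouping can differ from $G^{\ast}$ (wrong pairs), so its cost is not $\mathrm{OPT}$.

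For comparison, the paper's outline does \emph{not} cap meta-agent size and uses a different trigger: merge whenever the total computation so far reaches the (known) cost of the next merge. The competitive ratio then comes from bounding how many merges can be forced ``back-to-back'' at the same cost; that count is at most $\lceil N/2\rceil$ because the smallest combined agent has size $2$. Your episode-counting idea captures the $\lceil N/2\rceil$ number, but the charging of each episode to $\mathrm{OPT}$ is where the argument breaks; the paper instead charges against the merge sequence directly.
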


\begin{proof}[Proof outline] MA-CBS/R for an arbitrary number $N$ of agents
merges at most $N$ agents into a combined agent (that is, all the
agents), and performs at least $2^{\lceil \log N \rceil}-1\approx N$
merges. The algorithm should merge two agents every time the total
cost of computations performed from the beginning of the algorithm is
at least the cost of the next merge. Two cases are possible:
\begin{enumerate}
\item After a merge, the total cost of computations performed so far
  is no more than the cost of the next merge.
\item The total cost of computations is greater than the cost of the
  next merge, in which case the agents should be merged
  immediately.
\end{enumerate}
Assuming that the cost of a merge is much higher than the cost of a
split (a basic assumption behind MA-CBS), the second case takes place
when there are several merges of the same cost (that is, when several
combined agents of the same size must be constructed). The number of
such subsequent merges of the same cost is at most $\lceil \frac N 2
\rceil$ since the size of a combined agent is at least 2, and the
total overhead is $1+ \lceil \frac N 2 \rceil$. Thus, an algorithm
with a competitive ratio of $1 + \lceil \frac N 2 \rceil$ can be, at
least theoretically, constructed within the framework of MA-CBS/R.
\end{proof}

For the case of two agents, the competitive ratio $1+ \lceil \frac 2 2
\rceil = 2$ coincides with
Lemma~\ref{lmm:ma-cbs-r-2-comp}.  In practice, since MA-CBS/R uses a 
heuristic suggesting to merge two agents based on the number of
conflicts between \emph{these agents only} rather than the total
number of conflicts encountered, the performance should be better
than what follows from the worst-case analysis.

\section{Further improvements to MA-CBS/R}

There are several directions in which to look for further improvements
in the performance of MA-CBS/R. Here we introduce two improved
variants of MA-CBS/R based on different decision rules whether to
split the search on a conflict or to merge the agents and restart.

The first variant is \textbf{Randomized MA-CBS/R,} which is derived from
the randomized algorithm for the snoopy caching
problem~\cite{Karlin.randomized}. According to the randomized
algorithm, instead of deterministically merging after $B$ conflicts,
the decision to merge can be made randomly for any number $k$ of conflicts
between 1 and $B$ inclusive, and the probability $p_m(k)$ of merging
after $k$ conflicts is given by (\ref{eqn:p_m}):
\begin{equation}
p_m(k) = \left[B \left( \left( \frac {B+1} B \right)^{B-k+1} -1 \right)\right]^{-1}
\label{eqn:p_m}
\end{equation}
One can see that the probability grows with $k$ and reaches 1 for
$k=B$. For the ski rental problem and, consequently, for the 2-agent
case, the randomized algorithm yields a competitive ratio of $\frac e
{e-1} \approx 1.58$~\cite{Karlin.randomized}, compared to the
competitive ratio of $\approx 2$ for deterministic MA-CBS/R.

The second variant is \textbf{Delayed MA-CBS/R,} which decides whether
to merge or to split based both on the number of conflicts for a
particular pair of agents and on the value of the heuristic cost
estimate of the first and the next node in the node list. The algorithm is
based on the observation that the node obtained after a merge is likely
to have a higher heuristic cost estimate, such that instead of
exploring the subtree rooted in the node, the search will
consider other nodes instead. Therefore it might make sense to merge a
pair of agents in a node only if the current cost estimate is
\emph{sufficiently lower} than the cost estimate of the next node in
the node list. 

A full analysis of the utility of merging agents given the cost
estimates of nodes in the node list is beyond the scope of this
paper. However, the simplest case for discrete cost domains (to which
CBS, MA-CBS, and other MAPF algorithms are applied) are two first
nodes with the same cost estimate. In this case it should be
beneficial to delay the merge (and split instead) until the cost
estimate of the first node becomes \emph{strictly lower} than of the
second node, even if the number of conflicts reached $B$. Indeed, this
decision rule was implemented in Delayed MA-CBS/R.

\begin{table}[ht!]
\centering
\begin{tabular}{l | r | r | r | r}
       & {\bf time,sec} & {\bf expanded} & {\bf split} & {\bf reset} \\
MA-CBS/R & 5,480.9 & 21,999,664 & 424,350 & 105 \\
Randomized & 4,984.6 & 20,649,986 & 403,133 & 119 \\
Delayed & 4,252.2 & 17,190,873 & 393024 & 96
\end{tabular}
\caption{Variants of MA-CBS/R on $4\times 4$ partial sliding tile puzzle, 8
  agents, B=94.}
\label{tbl:improved-ma-cbs-r}
\end{table}

The two variants where empirically compared to the basic
MA-CBS/R on $4\times 4$ partial sliding tile puzzle with 8
agents for $B=94$ (M-CBS/R has the best performance for this
$B$)~(Table~\ref{tbl:improved-ma-cbs-r}). Both Randomized and
Delayed MA-CBS/R showed shorter search times and lower numbers
of expanded nodes, by $\approx 10\%$ and $\approx 20\%$
correspondingly. A better yet performance might be achieved
through combining the ideas of both Randomized and Delayed
MA-CBS/R, as well as through a more informed decision-making in
Delayed MA-CBS/R.

\section{Experiments on Benchmark Maps}

\begin{table}[ht!]
\centering
\begin{tabular}{l | r |  r | r}
 & \includegraphics[scale=0.05]{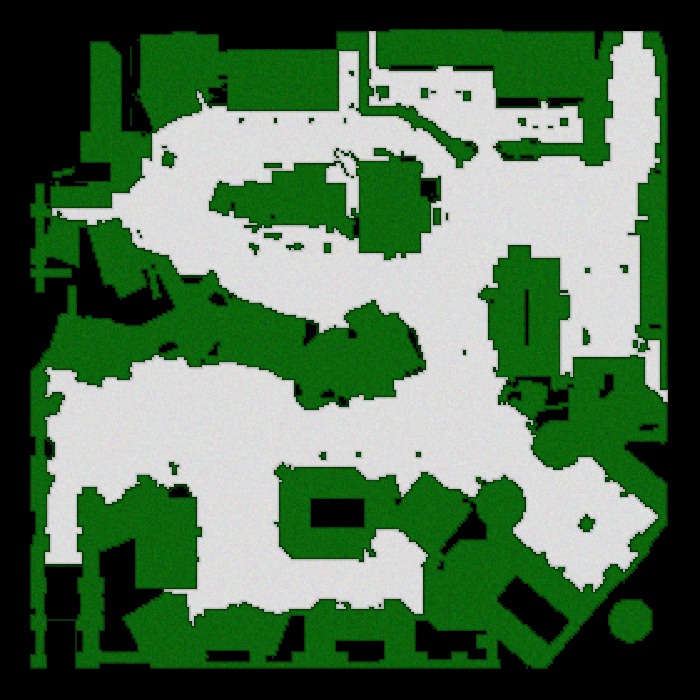}
 & \includegraphics[scale=0.05]{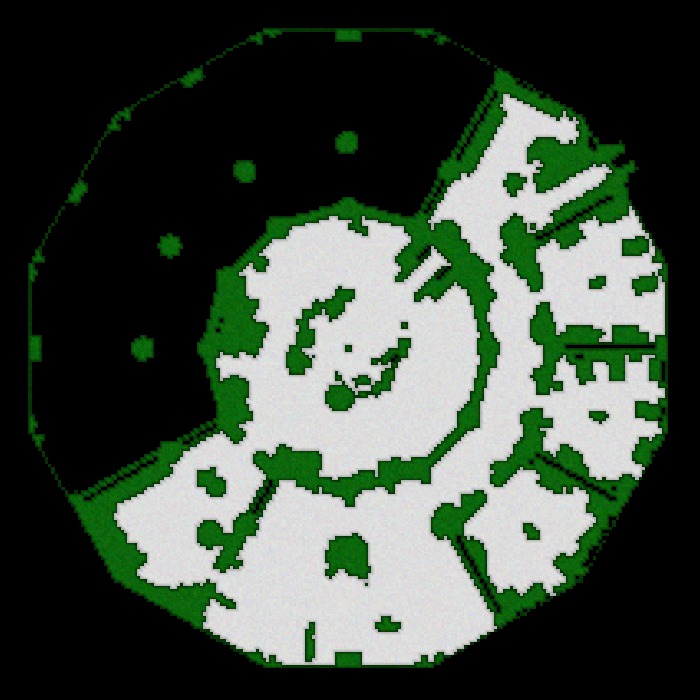}
 & \includegraphics[scale=0.05]{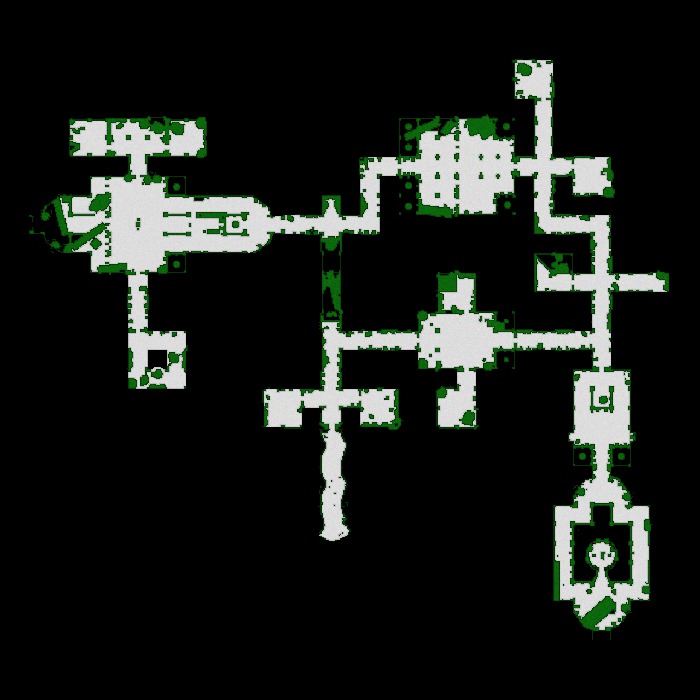} \\
 & {\bf den520d} & {\bf ost003d} & {\bf brc202d} \\ \hline
CBS & 68,321 & 9,800 & 174,727 \\ \rowcolor[gray]{.95}
MA-CBS-R(1) & 3,364 & 7,429 & 97,442 \\ \rowcolor[gray]{.95}
MA-CBS(1) & 2,935 & 7,351 & 98,455 \\ 
MA-CBS-R(16) & {\bf 707} & {\bf 6,837} & {\bf 66,286} \\
MA-CBS(16) & 3,531 & 26,561 & 75,212 \\ \rowcolor[gray]{.95}
MA-CBS-R(64) & 804 & 8,373 & 72,417 \\ \rowcolor[gray]{.95}
MA-CBS(64)   & 6905 & 30,582 & 78,430 \\
MA-CBS-R(256) & 1,454 & 10,296 & 92,086 \\
MA-CBS(256) & 18,704 & 17,844 & 99,987
\end{tabular}
\caption{\textit{Dragon Age: Origins} scenes with 16 agents, search times (sec) for CBS, MA-CBS/R, MA-CBS.}
\label{tbl:hog2}
\end{table}
Following the empirical evaluation in~\cite{Sharon.macbs}, we used the
same three maps from the game \textit{Dragon Age: Origins}
\cite{journals/tciaig/Sturtevant12}. As with the puzzle, 100 random
instances were generated, and the reported numbers are the totals over
the 100 instances. Table~\ref{tbl:hog2} shows the results of CBS,
MA-CBS, MA-CBS/R on 16-agent scenes for a range of values of $B$. 

Again, MA-CBS/R shows the best performance (bold in the table). The
advantage of MA-CBS/R over MA-CBS depends on the map. \textbf{den520d}
consists mostly of open spaces, and MA-CBS/R is \textbf{5 times} faster than
MA-CBS for the tested values of $B$. \textbf{ost003d} is a combination
of open spaces and bottlenecks; MA-CBS/R (for $B=16$) is 10\%
faster than MA-CBS (for $B=1$) for the tested values of $B$, but for
intermediate values of $B$, 16 and 64, MA-CBS/R is \textbf{4 times} faster than MA-CBS: hence the best value
that would be estimated from test runs of the low-level search on
single and combined agents might give a 4-fold increase in
performance. \textbf{brc202d} mostly consists of narrow paths
resulting in many bottlenecks. MA-CBS/R is $\approx 12\%$ faster than
MA-CBS for the tested values of $B$. For all maps and for all values
of $B$, MA-CBS/R is faster than MA-CBS. Moreover, the search time of
MA-CBS for intermediate values of $B$ is often longer than for
extreme (either low or high) values, an evidence which further
supports the advantage of restarting the search upon a merge. Delayed
or Randomized MA-CBS/R can be used to further improve the
performance for the best found value of $B$.

\section{Summary and Future Work}

This paper has several contributions:
\begin{itemize}
\item We provided a justification for the use of a fixed threshold
for decision-making in MA-CBS; the justification was based on the
worst-case analysis of a two-agent MAPF problem. 
\item Using a model problem based on this justification we introduced
  a more efficient version of MA-CBS, MA-CBS/R, where the search is
  restarted after a merge. MA-CBS/R exhibits shorter search times and
  lower numbers of expanded nodes than MA-CBS on both $4\times 4$ partial
  sliding tile puzzle and computer game scenes.
\item We also introduced two improved variants of MA-CBS/R, Randomized
  MA-CBS/R based on a known randomized algorithm, and Delayed MA-CBS/R
  based on the analysis of decision utilities. Both algorithms show
  better yet performance compared to MA-CBS/R.
\end{itemize}
There is room for further improvement of MA-CBS variants. Firstly, the
decision to merge a pair of agents can be made based on the history of
conflict occurrence and resolution through splitting, rather than just
the number of conflicts. Secondly, the tie-breaking, such as selection
of the conflicting agents to split on or merge, and of the conflict to
resolve in case of a split, can be improved using heuristic decision
rules. We believe that metareasoning
techniques~\cite{RussellWefald,Russell.rationality} can be applied
successfully to MAPF domain in general and MA-CBS variants in
particular to design the heuristics.

\bibliographystyle{aaai}
\bibliography{refs}

\end{document}